\newcommand{\R}{\mathbb{R}}
\newcommand{\incite}[1]{\citeauthor{#1} \shortcite{#1}}
\newtheorem{proposition}{Proposition}
\theoremstyle{definition}
\newtheorem{definition}{Definition}
\begin{document}
% The file aaai.sty is the style file for AAAI Press 
% proceedings, working notes, and technical reports.
%
\title{Mitigating Unwanted Biases with Adversarial Learning}
\author{Brian Hu Zhang \\
Stanford University \\
Stanford, CA \\
bhz@stanford.edu
\And
Blake Lemoine \\
Google \\
Mountain View, CA \\
lemoine@google.com
\And
Margaret Mitchell \\
Google \\
Mountain View, CA \\
mmitchellai@google.com
}
\maketitle
\begin{abstract}
Machine learning is a tool for building models that accurately represent input training data. When undesired biases concerning demographic groups are in the training data, well-trained models will reflect those biases. We present a framework for mitigating such biases by including a variable for the group of interest and simultaneously learning a predictor and an adversary. The input to the network X, here text or census data, produces a prediction Y, such as an analogy completion or income bracket, while the adversary tries to model a protected variable Z, here gender or zip code.

The objective is to maximize the predictor’s ability to predict Y while minimizing the adversary's ability to predict Z. Applied to analogy completion, this method results in accurate predictions that exhibit less evidence of stereotyping Z. When applied to a classification task using the UCI Adult (Census) Dataset, it results in a predictive model that does not lose much accuracy while achieving very close to equality of odds (Hardt, et al., 2016). The method is flexible and applicable to multiple definitions of fairness as well as a wide range of gradient-based learning models, including both regression and classification tasks. 
\end{abstract}

\maketitle

\section{Introduction}
Machine learning leverages data to build models capable of assessing the labels and properties of novel data.  Unfortunately, the available training data frequently contains biases with respect to things that we would rather not use for decision making.  Machine learning builds models faithful to training data and can lead to perpetuating these undesirable biases.  For example, systems designed to predict creditworthiness and systems designed to perform analogy completion have been demonstrated to be biased against racial minorities and women respectively.  Ideally we would be able to build a model which captures exactly those generalizations from the data which are useful for performing some task which are not discriminatory in a way which the people building those models consider unfair.

Work on training machine learning systems that output fair decisions has defined several useful measurements for {\it fairness}:  Demographic Parity, Equality of Odds, and Equality of Opportunity.  These can be imposed as constraints or incorporated into a loss function in order to mitigate disproportional outcomes in the system's output predictions regarding a protected demographic, such as sex.

In this paper, we examine these fairness measures in the context of {\it adversarial debiasing}.  We consider supervised deep learning tasks in which the task is to predict an output variable $Y$ given an input variable $X$, while remaining unbiased with respect to some variable $Z$.  We refer to $Z$ as the {\it protected variable}.  For these learning systems, the predictor $\hat Y = f(X)$ can be constructed as (input, output, protected) tuples $(X, Y, Z)$.
%$X, Y,$ and $Z$ may be discrete or continuous. 
The predictor $f(X)$ is usually given access to the protected variable $Z$, though this is not strictly necessary.  This construction allows the determination of which types of bias are considered undesirable for a particular application to be chosen through the specification of the protected variable.

We speak to the concept of {\it mitigating bias} using the known term {\it debiasing}\footnote{Note that ``debias'' may not be quite the right word, as all bias is not necessarily removed.}, following definitions provided by \incite{hardt2016equality} and refined by \incite{beutel2017data}. 

\begin{definition}{{\sc Demographic Parity.}} 
A predictor $\hat Y$ satisfies {\it demographic parity} if $\hat Y$ and $Z$ are independent. 
\end{definition}

 This means that $P(\hat Y = \hat y)$ is equal for all values of the protected variable $Z$: $P(\hat Y = \hat y) = P(\hat Y = \hat y |Z = z)$.

\begin{definition}{{\sc Equality of Odds.}} 
A predictor $\hat Y$ satisfies {\it equality of odds} if $\hat Y$ and $Z$ are conditionally independent given $Y$. 
\end{definition}

This means that, for all possible values of the true label $Y$, $P(\hat Y = \hat y)$ is the same for all values of the protected variable: $P(\hat Y = \hat y|Y = y) = P(\hat Y = \hat y|Z = z, Y = y)$

\begin{definition}{{\sc Equality of Opportunity.}}  
If the output variable $Y$ is discrete, a predictor $\hat Y$ satisfies {\it equality of opportunity} with respect to a class $y$ if $\hat Y$ and $Z$ are independent conditioned on $Y=y$.
\end{definition}

This means that, for a {\it particular} value of the true label $Y$, $P(\hat Y = \hat y)$ is the same for all values of the protected variable: $P(\hat Y = \hat y|Y = y) = P(\hat Y = \hat y|Z = z, Y = y)$

We present an adversarial technique for achieving whichever one of these definitions is desired.\footnote{Achieving equality of odds and demographic parity are generally incongruent goals.  See also \incite{kleinberg2016inherent} for incongruency between calibration and equalized odds.} 
A predictor $f$ will be trained to model $Y$ as accurately as possible while satisfying one of the above equality constraints.  Demographic parity will be achieved by introducing an adversary $g$ which will attempt to predict a value for $Z$ from $\hat Y$.  The gradient of $g$ will then be incorporated into the weight update rule of $f$ so as to reduce the amount of information about $Z$ transmitted through $\hat Y$.  Equality of odds will be achieved by also giving $g$ access to the true label $Y$, thereby limiting any information about $Z$ which $\hat Y$ contains beyond the information already contained in $Y$. 

We consider the case where the protected variable is a discrete feature present in the training set as well as the case in which the protected variable must be inferred from latent semantics (in particular, gender from word embeddings).  In order to accomplish the latter we adapt a technique presented by \incite{bolukbasi2016man} to define a subspace capturing the semantics of the protected variable, and then train a model to perform a word analogies task accurately, while unbiased on this protected variable.
A consequence of this technique is that the network learns {\it ``debiased"} embeddings, embeddings that have the semantics of the protected variable removed. These embeddings are still able to perform the analogy task well, but are better at avoiding problematic examples such as those shown in \incite{bolukbasi2016man}.

Results on the UCI Adult Dataset demonstrate the technique we introduce allows us to train a model that  achieves equality of odds to within 1\% on both protected groups. 

We also compare with the related previous work of  \incite{beutel2017data}, and find we are able to better equalize the differences between the two groups, measured by both False Positive Rate and False Negative Rate (1 - True Positive Rate), although note that the previous work performs better overall for False Negative Rate.

We provide some discussion on caveats pertaining to this approach, difficulties in training these models that are shared by many adversarial approaches, as well as some discussion on  difficulties that the fairness constraints introduce.  

\section{Related Work}

There has been significant work done in the area of debiasing various specific types of data or predictor.

{\it Debiasing word embeddings}: \incite{bolukbasi2016man} devises a method to remove gender bias from word embeddings. The method relies on a lot of human input; namely, it needs a large ``training set'' of gender-specific words. 

{\it Simple models}: \incite{lum2016statistical} demonstrate that removing the protected variable from the training data fails to yield a debiased model (since other variables can be highly correlated with the protected variable), and devise a method for learning fair predictive models in cases when the learning model is simple (e.g. linear regression). \incite{hardt2016equality} discuss the shortcomings of focusing solely on {\sc demographic parity}, present alternate definitions of fairness, and devise a method for deriving an unbiased predictor from a biased one, in cases when both the output variable and the protected variable are discrete.

{\it Adversarial training}: \incite{goodfellow2014generative} pioneered the technique of using multiple networks with competing goals to force the first network to ``deceive'' the second network, applying this method to the problem of creating real-life-like pictures. \incite{beutel2017data} apply an adversarial training method to achieve {\sc equality of opportunity} in cases when the output variable is discrete. They also discuss the ability of the adversary to be powerful enough to enforce a fairness constraint even when it has access to a very small training sample.

\section{Adversarial Debiasing} \label{section:adversarial-debiasing}
\begin{figure}
    \centering
    \includegraphics[width=\columnwidth]{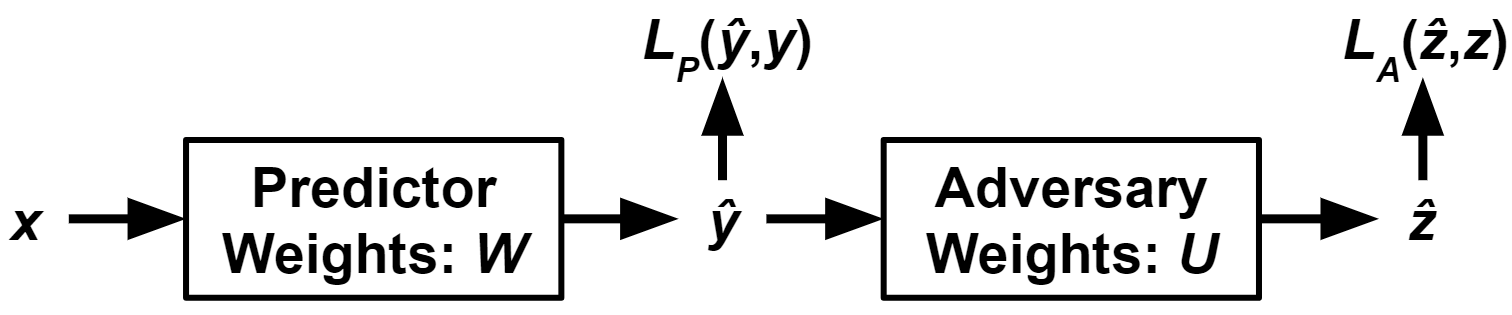}
    \caption{The architecture of the adversarial network.}
    \label{fig:arch}
\end{figure}
We begin with a model, which we call the {\it predictor}, trained to accomplish the task of predicting $Y$ given $X$. As in Figure \ref{fig:arch}, we assume that the model is trained by attempting to modify weights $W$ to minimize some loss $L_P(\hat y, y)$, 
using a gradient-based method such as stochastic gradient descent.

The output layer of the predictor is then used as an input to another network called the {\it adversary} which attempts to predict $Z$. This is part of the network corresponds to the {\it discriminator} in a typical GAN \cite{goodfellow2014generative}. We will suppose the adversary has loss term $L_A(\hat z, z)$ and weights $U$.
Depending on the definition of fairness being achieved, the adversary may have other inputs. 
\begin{itemize}
\item For {\sc Demographic Parity}, the adversary gets the predicted label $\hat Y$. Intuitively, this allows the adversary to try to predict the protected variable using nothing but the predicted label. The goal of the predictor is to prevent the adversary from doing this. 

\item For {\sc Equality of Odds}, the adversary gets $\hat Y$ and the true label $Y$. 
\item For {\sc Equality of Opportunity} on a given class $y$, we can restrict the training set of the adversary to training examples where $Y = y$.\footnote{This last technique of restricting the training set is discussed at length by \incite{beutel2017data}, so we only mention it here.}
\end{itemize}

In order for gradients to propagate correctly, $\hat Y$ above refers to the output layer of the network, not to the discrete prediction; for example, for a classification problem, $\hat Y$ could refer to the output of the softmax layer. 

We update $U$ 
to minimize $L_A$ at each training time step, according to the gradient $\nabla_U L_A$.  We modify $W$ according to the expression:
\begin{equation} \label{eq:grads}
    \nabla_W L_P - \text{proj}_{\nabla_W L_A} \nabla_W L_P - \alpha \nabla_W L_A 
\end{equation}
where $\alpha$ is a tuneable hyperparameter that
can vary at each time step and we define $\text{proj}_v x = 0$ if $v = 0$.

\begin{figure}
    \centering
    \includegraphics[width=\columnwidth]{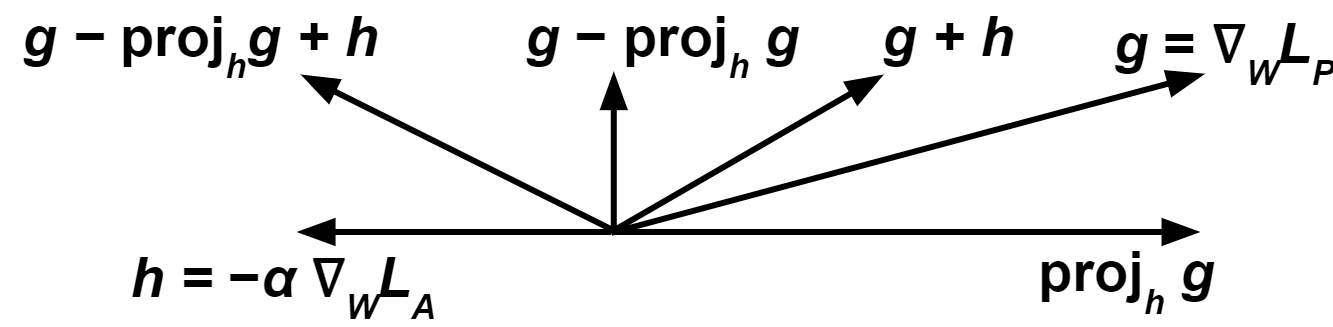}
    \caption{Diagram illustrating the gradients in Eqn. \ref{eq:grads} and the relevance of the projection term $\text{proj}_h g$. Without the projection term, in the pictured scenario, the predictor would move in the direction labelled $g + h$ in the diagram, which actually \textit{helps} the adversary. With the projection term, the predictor will never move in a direction that helps the adversary.}
    \label{fig:grads}
\end{figure}

The middle term $\text{proj}_{\nabla_W L_A} \nabla_W L_P$ prevents the predictor from moving in a direction that helps the adversary decrease its loss while the last term, $\alpha \nabla_W L_A$, attempts to increase the adversary's loss. Without the projection term, it is possible for the predictor to end up helping the adversary (see Fig. \ref{fig:grads}).  Without the last term, the predictor will never try to {\it hurt} the adversary, and, due to the stochastic nature of many gradient-based methods, will likely end up helping the adversary anyway. The result is that when training is completed the desired definition of equality should be satisfied.

Notice that our definitions and method make no assumptions about the nature of the output and protected variables: in particular, they work with both regression and classification models, as well as with both discrete and continuous protected variables.

\section{Properties}

We note several properties of the above method that we believe distinguish it from past work.

\begin{enumerate}
\item{\bf Generality:} The above method can be used to enforce {\sc demographic parity}, {\sc equality of odds}, or {\sc equality of opportunity} as described in \incite{hardt2016equality}. Further, it applies without modification to the cases when the output variable and/or protected variable are continuous instead of discrete.

\item{\bf Model-agnostic:}  The adversarial approach described can be applied regardless of how simple or complex the predictor's model is, as long as the model is trained using a gradient-based method, as many modern learning models are. Further, as we will discuss later, at least in some situations, we suggest that the adversary does not need to be nearly as complex as the predictor---a simple adversary can be used with a complex predictor.  

\item{\bf Optimality:}  Under certain conditions, we show that if the predictor converges, it must converge to a model that satisfies the desired fairness definition. Since the predictor also attempts to decrease the prediction loss $L_P$, the predictor should still perform well on the target task.
\end{enumerate}

\section{Theoretical Guarantees} \label{section:theory}
\begin{proposition}
Let the predictor, the adversary, and their weights $W$,  $U$ be defined according to Section \ref{section:adversarial-debiasing} %\notem{This needs to be a reference to the section number usually, not the name of the section.}\notebl{done}. 
Let $L_A(W, U)$ be the adversary's loss, convex in $U$, concave in $W$,\footnote{We understand that these assumptions are not satisfied in most use cases involving neural networks; however, as with most theoretical analyses of machine learning models (see, for example, \incite{goodfellow2014generative} or \incite{kingma2014adam}; the former makes even stronger assumptions), assumptions of concavity are necessary for any proofs to work} and continuously differentiable everywhere.

Suppose that: 
\begin{enumerate}
    \item When the predictor's weights are $W_0$, the predictor gives the same output $\hat Y$ regardless of input $X$. (For example, when $W_0 = 0$). % \notem{This is effectively ignoring $X$.  Is this attempting to assert that $P(\hat Y| X) = P(\hat Y)$?} %(perhaps $W_0$ is the 0 vector),
    \item There are some weights $U_0$ that minimize $L_A$ when the weights for $\hat Y$ have no effect on the output: For all $W$, $L_A(W, U_0) = \min_U L_A(W_0, U)$. 
    \item Predictor and adversary converge to $W^*$ and $U^*$ respectively.
\end{enumerate} 

Then, $L_A(W^*, U^*) = L_A(W^*, U_0)$. That is, the adversary gains no advantage from using the weights for $\hat Y$. 

\end{proposition}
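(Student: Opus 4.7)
The plan is to exploit both the adversary's optimality (from its own convergence) and the structure of the predictor's update rule in Equation~\ref{eq:grads} to sandwich $L_A(W^*, U^*)$ between $L_A(W^*, U_0)$ and itself. The key observation is that the ``neutral'' weights $U_0$ in assumption~2 certify that the adversary can always achieve the value $\min_U L_A(W_0, U)$ regardless of $W$, so the inequality $L_A(W, U_0)$ is constant in $W$ pins down a baseline value that the adversary cannot beat when $\hat Y$ is uninformative.

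First I would handle the easy direction. Since the adversary converges by minimizing $L_A(W^*, \cdot)$, and $L_A$ is convex in $U$, convergence of the adversary gives $\nabla_U L_A(W^*, U^*) = 0$, so $U^*$ globally minimizes $L_A(W^*, \cdot)$. In particular, $L_A(W^*, U^*) \le L_A(W^*, U_0)$. This is the ``$\le$'' half of the desired equality.

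The second step, and the one that requires the update rule, is to show $L_A(W^*, U^*) \ge L_A(W^*, U_0)$. Write $g = \nabla_W L_P(W^*)$ and $h = \nabla_W L_A(W^*, U^*)$. Convergence of the predictor forces the expression in Equation~\ref{eq:grads} to vanish, i.e.\ $g - \mathrm{proj}_h g - \alpha h = 0$. But $g - \mathrm{proj}_h g$ is, by definition of the projection, orthogonal to $h$, whereas $\alpha h$ is parallel to $h$; the orthogonal decomposition of the zero vector is unique, so $\alpha h = 0$, and (taking $\alpha > 0$) we get $h = \nabla_W L_A(W^*, U^*) = 0$. Concavity of $L_A(\cdot, U^*)$ in $W$ then upgrades this critical point to a global maximum: $L_A(W^*, U^*) \ge L_A(W_0, U^*)$. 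Now chain the inequalities: $L_A(W_0, U^*) \ge \min_U L_A(W_0, U) = L_A(W_0, U_0) = L_A(W^*, U_0)$, where the last equality is precisely assumption~2. Combining with the easy direction yields $L_A(W^*, U^*) = L_A(W^*, U_0)$.

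The main obstacle is the middle step, where convergence of the predictor has to be translated into $\nabla_W L_A(W^*, U^*) = 0$: this is where the projection term in the custom update rule is doing real work, and it is the reason assumption~2 alone (without the projection) would not suffice. The rest is essentially packaging convexity in $U$, concavity in $W$, and the $W$-independence of $L_A(\cdot, U_0)$ into a two-sided sandwich; I do not expect any subtlety beyond checking that $\alpha > 0$ is implicitly assumed so that the orthogonal-decomposition argument forces $h = 0$ rather than $\alpha = 0$.
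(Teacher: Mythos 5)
Your proof is correct and follows essentially the same route as the paper: the same four-link chain $L_A(W^*,U_0) \ge L_A(W^*,U^*) \ge L_A(W_0,U^*) \ge L_A(W_0,U_0) = L_A(W^*,U_0)$, with the first inequality from the adversary's convergence plus convexity in $U$ and the last two from assumption~2. Your justification of the middle inequality---showing that vanishing of the update in Eqn.~\ref{eq:grads} forces $\nabla_W L_A(W^*,U^*)=0$ via the uniqueness of the orthogonal decomposition, then invoking concavity in $W$---is in fact more explicit than the paper's informal ``the predictor would move toward $W_0$'' argument, and correctly identifies that $\alpha>0$ must be assumed.
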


\begin{proof}
Since the adversary converges, $L_A(W^*, U^*) \le L_A(W^*, U_0)$: otherwise, since $L_A$ is convex in $U$, the adversary's weights would move toward $U_0$.  In other words, the adversary's minimum is the point at which the adversary gains an advantage from using $\hat Y$.  Similarly, since the predictor converges, $L_A(W^*, U^*) \ge L_A(W_0, U^*)$: Otherwise, the predictor would be able to increase the adversary's loss by moving toward $W_0$, and the projection term and negative weight on $\nabla_W L_A$ in Eqn. \ref{eq:grads} would push the predictor to move towards $0$. Then: 
\begin{alignat*}{3}
    L_A(W^*, U_0) &\ge L_A(W^*, U^*)  &\qq{(as stated above)}
    \\&\ge L_A(W_0, U^*) &\qq{(as stated above)}
    \\&\ge L_A(W_0, U_0) &\qq{(by definition of $U_0$)}
    \\&= L_A(W^*, U_0) &\qq{(by definition of $U_0$)}
\end{alignat*}
so we must have $L_A(W^*, U^*) = L_A(W^*, U_0)$.
\end{proof}

Note that, in this proof, the adversary can be operating in a few different ways, as long as it is given $\hat Y$ as one of its inputs; for example, for demographic parity, it could be given only $\hat Y$; for equality of odds, it can be given both $\hat Y$ and $Y$.

We will show in the next propositions that the adversary gaining no advantage from information about $\hat Y$ is exactly the condition needed to guarantee that desired definitions of equality are satisfied.

\begin{proposition} \label{prop:parity}
Let the training data be comprised of triples $(X, Y, Z)$ drawn according to some distribution $D$. Suppose:
\begin{enumerate}
    \item The protected variable $Z$ is discrete.
    \item The adversary is trained for {\sc demographic parity}; i.e. the adversary is given only the prediction $\hat y$. 
    \item The adversary is strong enough that, at convergence, it has learned a randomized function A that minimizes the cross-entropy loss $\mathbb E_{(x, y, z) \sim D}[-\log P(A(\hat y) = z)]$; i.e. the adversary in fact achieves the optimal accuracy with which you can predict $Z$ from $\hat Y$
%    \item The adversary predicts $Z$ optimally using only $\hat Y$; i.e., it learns a randomized function $A$ that minimizes the  cross-entropy loss $\mathbb E_{(x, y, z) \sim D}[-\log P(A(\hat y) = z)]$. \notem{Useful to correct this more directly with what an 'ideal' adversary would mean (doing a good job at being an adversary).  Another way to state a similar idea here, but tied to the previous definition of Dem. Parity, is to say that adversary predicts values that do not change $\hat Y$ -- then the immediate question is, is the adversary just learning noise?}%
    \item The predictor completely fools the adversary; in particular, the adversary achieves loss $H(Z)$, the entropy of $Z$.
\end{enumerate}
Then the predictor satisfies {\sc demographic parity}; i.e., $\hat Y \perp Z$.
\end{proposition}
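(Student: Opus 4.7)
The plan is to identify the optimal adversary's loss with the conditional entropy $H(Z \mid \hat Y)$, and then use the hypothesis that this loss equals $H(Z)$ to conclude that the mutual information $I(\hat Y; Z)$ is zero, hence $\hat Y \perp Z$.

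First I would unpack the optimality assumption on the adversary. Since $A$ is a randomized function minimizing the cross-entropy $\mathbb{E}_{(x,y,z)\sim D}[-\log P(A(\hat y)=z)]$, a standard argument (either by pointwise minimization for each fixed value $\hat y$ via Gibbs' inequality, or by differentiating with a Lagrange multiplier enforcing that $A(\hat y)$ is a probability distribution) shows that the minimizer is exactly $P(A(\hat y) = z) = P(Z = z \mid \hat Y = \hat y)$. Substituting this back into the loss gives the optimal value
\[
\mathbb{E}_{\hat y}\bigl[-\textstyle\sum_z P(Z=z \mid \hat Y=\hat y)\log P(Z=z \mid \hat Y=\hat y)\bigr] = H(Z \mid \hat Y).
\]

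Next, I would invoke hypothesis (4): the adversary's loss equals $H(Z)$. Combined with the previous step, this gives $H(Z \mid \hat Y) = H(Z)$, so $I(\hat Y; Z) = H(Z) - H(Z \mid \hat Y) = 0$. Since mutual information vanishes iff the two variables are independent, this yields $\hat Y \perp Z$, which is precisely demographic parity.

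The main conceptual obstacle is the first step: showing that an optimal randomized function under the cross-entropy objective reproduces the true posterior $P(Z \mid \hat Y)$. It requires being careful about what ``randomized function that minimizes cross-entropy'' means (it is the Bayes-optimal predictor under log loss) and about the fact that the hypothesis class is implicitly assumed rich enough for the adversary to realize this posterior — which is baked into assumption (3). Once that identification is granted, the remainder is a routine information-theoretic identity.
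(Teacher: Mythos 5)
Your proposal is correct and follows essentially the same route as the paper: both arguments hinge on identifying the optimal adversary's loss with the conditional entropy $H(Z\mid\hat Y)$ and then using the fact that $H(Z\mid\hat Y)=H(Z)$ forces independence. The only cosmetic difference is that you argue directly via $I(\hat Y;Z)=0$ (and additionally verify the lower bound that no adversary can beat $H(Z\mid\hat Y)$), whereas the paper argues by contradiction using only the fact that the posterior-sampling adversary \emph{achieves} $H(Z\mid\hat Y)$.
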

\begin{proof} 
Notice that if the adversary draws $A(\hat y)$ according to the distribution $Z|\hat Y = \hat y$, then its loss is exactly the conditional entropy
\begin{alignat*}{2}
H(Z|\hat Y) &= \mathbb E[-\log P(Z=z | \hat Y = \hat y)] \\
&= \mathbb E[-\log P(A(\hat y) = z | \hat Y = \hat y)]
\end{alignat*}
where the expectation is taken over $(x, y, z) \sim D$.
% right here works, right?
Now suppose for contradiction that $\hat Y$ is dependent on $Z$. Then $H(Z|\hat Y) < H(Z)$, so the adversary can achieve loss less than $H(Z)$, contradicting assumption (4).
\end{proof}

\begin{proposition} \label{prop:eq-odds}
If assumptions (2)-(4) above are replaced with the analogous equality of odds assumptions; in particular, that the adversary is given $\hat y$ and $y$, and the adversary cannot achieve loss better than $H(Z|Y)$~then the predictor will satisfy {\sc Equality of Odds}; i.e., $(\hat Y \perp Z) | Y$
\end{proposition}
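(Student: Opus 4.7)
The plan is to mirror the argument of Proposition~\ref{prop:parity} almost verbatim, replacing every unconditional quantity with its $Y$-conditioned analogue. The key observation is that when the adversary takes both $\hat y$ and $y$ as inputs, the randomized function minimizing the cross-entropy loss $\mathbb{E}_{(x,y,z)\sim D}[-\log P(A(\hat y, y) = z)]$ is the one that samples $A(\hat y, y)$ from the conditional distribution $Z \mid \hat Y = \hat y,\, Y = y$, and the resulting minimum expected loss is exactly the conditional entropy $H(Z \mid \hat Y, Y)$. This replaces the role played by $H(Z\mid \hat Y)$ in the demographic parity proof.

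With that fact in hand, I would proceed in three short steps. First, note that the adversary can always ignore $\hat y$ and sample from $Z \mid Y = y$, which attains loss $H(Z \mid Y)$; combined with assumption that no strategy achieves less than $H(Z\mid Y)$, this pins the optimal adversary loss at exactly $H(Z \mid Y)$. Second, by the optimality characterization above, that optimal loss also equals $H(Z \mid \hat Y, Y)$, so $H(Z \mid \hat Y, Y) = H(Z \mid Y)$. Third, I would invoke the standard information-theoretic fact that $H(Z \mid \hat Y, Y) = H(Z \mid Y)$ is equivalent to the conditional mutual information $I(Z;\hat Y \mid Y)$ vanishing, which is precisely the statement $(\hat Y \perp Z) \mid Y$. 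As in Proposition~\ref{prop:parity}, the cleanest presentation is via contradiction: if $\hat Y$ and $Z$ were conditionally dependent given $Y$, then $H(Z\mid \hat Y, Y) < H(Z\mid Y)$, contradicting the lower-bound assumption.

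I do not expect any genuine obstacle, since the argument is essentially the contrapositive used for demographic parity with an extra variable in every conditioning bar. The one detail worth double-checking is that assumption~(3) of Proposition~\ref{prop:parity} carries over correctly to randomized functions of two arguments, i.e., that the adversary is rich enough to realize the Bayes-optimal conditional sampler from $(\hat y, y)$ rather than from $\hat y$ alone; given the parallel phrasing in the statement of Proposition~\ref{prop:eq-odds}, this extension is clearly intended, and once granted the proof collapses to a one-line chain of (in)equalities terminating in the desired conditional independence.
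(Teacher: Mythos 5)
Your proposal is correct and follows essentially the same route as the paper's proof: both identify the Bayes-optimal randomized adversary as the sampler from $Z \mid \hat Y = \hat y, Y = y$ achieving loss $H(Z \mid \hat Y, Y)$, and both conclude by contradiction that conditional dependence would force $H(Z \mid \hat Y, Y) < H(Z \mid Y)$, violating the assumed lower bound. Your added remarks on $I(Z;\hat Y \mid Y) = 0$ and on the adversary's richness as a two-argument sampler are just explicit statements of what the paper leaves implicit.
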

\begin{proof} 
Analogous to the above. Notice that if the adversary draws $A(\hat y, y) \sim (Z|\hat Y = \hat y, Y = y)$, then its loss is exactly the conditional entropy 
\begin{alignat*}{2}
H(Z|\hat Y,Y) &= \mathbb E[-\log P(Z=z | \hat Y=\hat y,Y=y)] \\
&= \mathbb E[-\log P(A(\hat y) = z | \hat Y=\hat y,Y=y)]
\end{alignat*}
where the expectation is again taken over $(x, y, z) \sim D$. But if $\hat Y$ is conditionally dependent on $Z$ given $Y$, then $H(Z|\hat Y, Y) < H(Z|Y)$, so the adversary can achieve loss less than $H(Z|Y)$.
\end{proof}

Note that Propositions \ref{prop:parity} and \ref{prop:eq-odds} work analogously in the case of continuous $Y$ and $Z$, with the probability mass function $P$ replaced with the probability density function $p$, and the discrete entropy $H$ replaced by the differential entropy $h(X) = \mathbb E[-\log p(x)]$, since the relevant property ($h(A) = h(A|B)$ iff $A \perp B$) holds for differential entropy as well. They also work analogously when the adversary $A$ is restricted to a limited set of predictors. 

For example, an adversary using least-squares regression trying to enforce equality of odds can be thought of as one that outputs $A(\hat y, y) \sim N(\mu(\hat y, y), \sigma^2)$ where $\mu(\hat y, y)$ is the output of the regressor, and $\sigma^2 > 0$ is a fixed constant. 
Note now that the differential entropy $h(Z|\hat Y, Y) = \mathbb{E}
[-\log p(z|\hat y, y)]$ is nothing more than the expected log-likelihood, and so the function $\mu$ that minimizes this quantity is the optimal least-squares regressor. Thus, for example, if we restrict $\mu$ to be a linear function of $(\hat y, y)$, and the other conditions of Proposition \ref{prop:eq-odds} hold, then an analogous argument to the above propositions shows that $\hat Y$ has no linear relationship with $Z$ after conditioning on $Y$. 

These claims together illustrate that a sufficiently powerful adversary trained on a sufficiently large training set can indeed accurately enforce the demographic parity or equality of odds constraints on the predictor, if the adversary and predictor converge. Guaranteed convergence is harder to achieve, both in theory and practice. In the practical scenarios below we discuss methods to encourage the training algorithm to converge, as well as reasonable choices of the adversary model that are both powerful and easy to train.

\section{Experiments} \label{section:experiments}

All models were trained using the Adam optimizer \cite{kingma2014adam} for both predictor and adversary.

\subsection{Toy Scenario}

We generate a training sample ${(x^{(i)},y^{(i)},z^{(i)})}_{i=1}^{n}$ (where $z$ is the protected variable) as follows. For each $i$, let $r \in {0,1}$ be picked uniformly at random, and let $v \sim N(r_{i},1)$. Let $u,w \sim N(v_{i},1)$ vary independently. Then $x^{(i)}=(r,u),y^{(i)}=[w>0],z^{(i)}=r$. (where $[~]$ denotes an indicator function). Intuitively, the variable that we are trying to predict, $y$, depends directly on $v$ and $r$. We are given as inputs the protected variable $r$, and a noisy measurement of $v$.  The end goal would be to train a model that predicts $y$ while being unbiased on $r$, effectively removing the direct signal for $r$ from the learned model. 

If one trains generically a logistic regression model to predict $y$ given $x$, it outputs something like $y=\sigma(0.7u+0.7r)$, which is a reasonable model, but heavily incorporates the protected variable $r$. To debias, We now train a model that achieves {\sc demographic parity}. Note that removing the variable $r$ from the training data is insuffucient for debiasing: the model will still learn to use $u$ to predict $y$, and $u$ is correlated with $r$. If we use the described technique and add in another logistic model that tries to predict $z$ given $y$, we find that the predictor model outputs something like $y=\sigma(0.6u-0.6r+0.6)$. Notice that not only is $r$ not included with a positive weight anymore, the model actually learns to use a negative weight on $r$ in order to balance out the effect of $r$ on $u$  Notice that $u-r \sim N(0,2)$; i.e., it is not dependent on $r$, so we have successfully trained a model to predict $y$ independently of $r$.

\subsection{Word Embeddings}

We train a model to perform the analogy task (i.e., fill in the blank: \texttt{man : woman :: he : ?}).

It is known that word embeddings reflect or amplify problematic biases from the data they are trained on, for example, gender \cite{bolukbasi2016man}. We seek to train a model that can still solve analogies well, but is less prone to these gender biases. We first calculate a ``gender direction'' $g$ using a method based on \incite{bolukbasi2016man} which gives a method for defining the protected variable. We will use this technique in the context of defining gender for word embeddings, but, as discussed in \incite{bolukbasi2016man}, the technique generalizes to other protected variables and other forms of embeddings. Following \incite{bolukbasi2016man}, we pick 10 (male, female) word pairs, and define the and define the {\it bias subspace} to be the space spanned by the top $k$ principal components of the differences, where $k$ is a tuneable parameter. In our experiments, we find that $k = 1$ gives reasonable results, so we did not experiment further.

We use embeddings trained from Wikipedia to generate input data from the Google analogy data set \cite{mikolov2013distributed}.  For each analogy in the dataset, we let $x=(x_{1},x_{2},x_{3}) \in \R^{3d}$ comprise the word vectors for the first three words, $y$ be the word vector of the fourth word, and $z$ be $\text{proj}_{g}y$.  It is worth noting that these word vectors computed from the original embeddings are never updated nor is there projection onto the {\it bias subspace} and therefore the original word embeddings are never modified.  What is learned is a tranform from a biased embedding space to a debiased embedding space.

\begin{table}
\begin{center}
  \begin{tabular}{| l | c | l | c |}
    \hline
    \multicolumn{2}{|c|}{biased} & \multicolumn{2}{c|}{debiased} \\ \hline
    neighbor & similarity & neighbor & similarity \\
    \hline \hline
    nurse & 1.0121 & nurse & 0.7056 \\ \hline
    nanny & 0.9035 & obstetrician & 0.6861 \\ \hline
    fianc\'{e}e & 0.8700 & pediatrician & 0.6447 \\ \hline
    maid & 0.8674 & dentist & 0.6367 \\ \hline
    fianc\'{e} & 0.8617 & surgeon & 0.6303 \\ \hline
    mother & 0.8612 & physician & 0.6254 \\ \hline
    fiance & 0.8611 & cardiologist & 0.6088 \\ \hline
    dentist & 0.8569 & pharmacist & 0.6081 \\ \hline
    woman & 0.8564 & hospital & 0.5969 \\ \hline
  \end{tabular}
  \caption{Completions for \texttt{\small he : she :: doctor : ?}}
  \label{table:heshedoctor}
\end{center}
\end{table}

As a model, we use the following: let $v=x_{2}+x_{3}-x_{1}$, and output $\hat{y}=v-ww^{T}v$, where our model parameter is $w$. Intuitively, $v$ is the ``generic'' analogy vector as is commonly\footnote{see e.g. \incite{mikolov2013distributed}} %\notem{commonly?} \notebz{added ref to mikolov as footnote}
used for the analogy task. If left to its own devices ({\it i.e.}, if not told to be unbiased on anything), the model should either learn $w=0$ or else learn $w$ as a useless vector. 

By contrast, if we add the adversarial discriminator network (here, simply $\hat{z}=w_{2}^{T}\hat{y}$), we expect the debiased prediction model to learn that $w$ should be something close to $g$ (or $-g$), so that the discriminator cannot predict $z=proj_{g}y$. Indeed, both of these expectations hold: Without debiasing, the trained vector $w$ is approximately a unit vector nearly perpendicular to $g: w^{T}g=0.08,||w||=0.82$; with debiasing, $w$ is approximately a unit vector pointing in a direction highly correlated with $g: w^{T}g=0.55,||w||=0.96$. Even after debiasing, gendered analogies such as \texttt{man : woman :: he : she} are still preserved; however, many biased analogies go away, suggesting that the adversarial training process was indeed successful. An example of the kinds of changes in analogy completions observed after debiasing are illustrated in Table \ref{table:heshedoctor}\footnote{The presence of {\it nurse} in the second position may seem worrying, but it should be noted that in this particular set of word embeddings, {\it nurse} is the nearest neighbor to {\it doctor}; no amount of debiasing will change this.}.

\subsection{UCI Adult Dataset}\label{subsec:uci}

\begin{table}
    \centering
    \small
    \begin{tabular}{@{ }l@{ }|@{\hspace{.25em}}l@{\hspace{.25em}}|@{\hspace{.4em}}p{16em}} % 17 was overfull hbox :(
    {\bf Feature} & {\bf Type} &  {\bf Description} \\\hline
age	& Cont & Age of the individual\\
capital\_gain& Cont& Capital gains recorded\\
capital\_loss& Cont& Capital losses recorded\\
education\_num& Cont& Highest education level (numerical form)\\
fnlwgt& Cont& \# of people census takers believe that observation represents\\
hours\_per\_week& Cont& Hours worked per week\\\hline
education& Cat& Highest level of education achieved\\
income& 	Cat& Whether individual makes $>$ \$50K annually   \\
marital\_status& Cat& Marital status\\
native\_country& Cat& Country of origin\\
occupation& Cat& Occupation \\
race& Cat& White, Asian-Pac-Islander, Amer-Indian-Eskimo, Other, Black\\
relationship& Cat& Wife, Own-child, Husband, Not-in-family, Other-relative, Unmarried\\
sex& Cat& Female, Male\\
workclass&  Cat& Employer type \\
 \end{tabular}
    \caption{Features in the UCI dataset per individual.  Features are either continuous (Cont) or Categorical (Cat).  Categorical features are converted to sparse tensors for the model.}
    \label{tab:uci}
\end{table}

To better align with the work in \incite{beutel2017data}, we attempt to enforce {\sc equality of odds} on a model for the task of predicting the income of a person -- in particular, predicting whether the income is $>\$50k$ -- given various attributes about the person, as made available in the UCI Adult dataset \cite{asuncion2007uci}.

Details on the features that the dataset provides are available in Table \ref{tab:uci}.  We use both categorical and continuous columns as given, with exception to the {\tt fnlwgt} feature, which we discard.  We convert the remaining columns into tensors where the categorical columns are sparse tensors, age is bucketized at boundaries $[18, 25, 30, 35, 40, 45, 50, 55, 60, 65]$, and the rest of the continuous columns are real-valued.

As discussed before, to enforce equality of odds, we give the adversary access to the true label $y$. The adversary will learn the relationship between $y$ and $z$ regardless of what the predictor does; further, if the predictor's predictions $\hat{y}$ give more information about $z$ than is already contained in $y$, the adversary will be able to improve its loss. Thus, the predictor, in attempting to fool the adversary, will move toward making sure that $\hat{y}$ does not give such additional information; in other words, toward equality of odds.

Our protected variable $z$ is a binary-valued variable for the two sexes annotated, {\sc male} and {\sc female}.

Our predictor model is straightforward logistic regression: $\hat{y}=\sigma(w_{1}\cdot x+b)$, where $\sigma$ is the sigmoid function.  Our adversary model takes the form of the following logistic-regression-like model:
\begin{alignat*}{2}
s=\sigma\qty((1+|c|)\sigma^{-1}(\hat{y}))\quad \hat{z}=w_{2}\cdot[s,sy,s(1-y)]+b
\end{alignat*}
where $c$ and $b$ are learnable scalars,  $w_{2}$ is a learnable vector, and $\sigma^{-1}$ is the inverse of the sigmoid function (logit function) $\sigma^{-1}(t)=\log t-\log (1-t)$. Intuitively, we want our adversary to be able to learn functions of the form $\hat{z}=f(y,[\hat{y}>0.5])$ (i.e. dependent only on the boolean predicted value $[\hat{y}>0.5]$), and thus enforce equality of odds. Here, the adversary would learn such a function by making $c$ extremely large. We add 1 to $|c|$ to make sure the adversary never tries to ignore $\hat{y}$ by setting $c=0$, which could be a difficult local minimum for the adversary to escape\footnote{This value added to $|c|$ is an adjustable hyperparameter; we found reasonable results using the value 1 and thus not feel the need to experiment further.}. This adversary is both general enough to be used whenever $y$ and $z$ are both discrete\footnote{If $y$ and $z$ are multi-class, then the sigmoid becomes a softmax, but everything else remains the same.}, and powerful enough that deviation from true equality of odds should cause the adversary to be able to decrease its loss.

Without tweaking, this algorithm ran into issues with local minima, and the resulting models were often closer to demographic parity than equality of odds.
%; certainly nowhere close to equality of odds. 
We implemented a technique that helped: by increasing the hyperparameter $\alpha$ in Eqn. \ref{eq:grads} over time, the predictor had a much easier time learning to deceive the adversary and therefore more strictly enforce equality of odds. We set $\alpha=\sqrt{t}$ (where $t$ is the step counter), and to avoid divergence we set the predictor's step size to $\eta \propto 1/t$, so that $\alpha \eta \rightarrow 0$ as is preferred for stochastic gradient-based methods such as Adam. 

We train the model twice, once with debiasing and once without, and present side-by-side confusion matrices on the test set for income bracket with respect to the protected variable values {\it Male} and {\it Female}, shown in Table \ref{table:adult}, and we present the false positive rates (FPR) and false negative rates (FNR) in Table \ref{tab:false_rates}.  Note that false negative rate is equal to $1 -{}$ true positive rate, so the trade-offs are directly comparable to the $(x,y)$ values of an ROC curve.

\begin{table}[h!tb]
    \begin{center}
    \begin{tabular}{|c|c|c|c|c|c|c|}
    \hline
    \multicolumn{3}{|c|}{Without Debiasing} & \multicolumn{3}{|c|}{With Debiasing}   \\
    \hline
    \it Female & Pred 0 & Pred 1 & \it Female & Pred 0 & Pred 1  \\
    \hline
    True 0 & 4711 & 120 & True 0 & 4518 & 313  \\
    \hline
    True 1 & 265 & 325 & True 1 & 263 & 327  \\
    \hline
    \it Male & Pred 0 & Pred 1 & \it Male & Pred 0 & Pred 1  \\
    \hline
    True 0 & 6907 & 697 & True 0 & 7071 & 533  \\
    \hline
    True 1 & 1194 & 2062 & True 1 & 1416 & 1840  \\
    \hline
    \end{tabular}
    \caption{Confusion matrices on the UCI Adult dataset, with and without equality  of odds enforcement.}
    \label{table:adult}
    \end{center}
\end{table}

\begin{table}
\resizebox{\columnwidth}{!}{
\begin{tabular}{l|l|ll|ll|}
& & \multicolumn{2}{c|}{Female} & \multicolumn{2}{|c|}{Male} \\
 & & Without & With  & Without & With \\\hline
\multirow{2}{*}{\incite{beutel2017data}} & FPR & 0.1875 & 0.0308 & 0.1200 & 0.1778 \\
& FNR & 0.0651 & 0.0822 & 0.1828 & 0.1520 \\\hline
 
\multirow{2}{*}{Current work} & FPR & 0.0248 & 0.0647 & 0.0917 & 0.0701 \\
& FNR & 0.4492 & 0.4458 & 0.3667 & 0.4349 \\\hline
\end{tabular}}
\caption{False Positive Rate (FPR) and False Negative Rate (FNR) for income bracket predictions for the two sex subgroups, with and without adversarial debiasing.}\label{tab:false_rates}
\end{table}

We notice that debiasing has only a small effect on overall accuracy ($86.0\%$ vs $84.5\%$), and that the debiased model indeed (nearly) obeys equality of odds: as shown in Table 4, with debiasing, the FNR and FPR values are approximately equal across sex subgroups: $0.0647 \approx 0.0701$ and $0.4458 \approx 0.4349$.

Although the values don't exactly reach equality, neither difference is statistically significant: a two-proportion two-tail large sample $z$-test yields $p$-values 0.25 for $y=0$ and 0.62 for $y=1$.

\section{Conclusion}

In this work, we demonstrate a general and powerful method for training unbiased machine learning models. 
We state and prove theoretical guarantees for our method under reasonable assumptions, demonstrating in theory that the method can enforce the constraints that we claim, across multiple definitions of fairness, regardless of the complexity of the predictor's model, or the nature (discrete or continuous) of the predicted and protected variables in question. We apply the method in practice to two very different scenarios: a standard supervised learning task, and the task of debiasing word embeddings while still maintaining ability to perform a certain task (analogies).  We demonstrate in both cases the ability to train a model that is demonstrably less biased than the original one, and yet still performs extremely well on the task at hand. We discuss difficulties in getting these models to converge. We propose, in the common case of discrete output and protected variables, a simple adversary that is usable regardless of the complexity of the underlying model. 

\section{Future Work}

This process yields many questions that require further work to answer.
\begin{enumerate}
    \item The debiased word embeddings we have trained are still useful in analogies. Are they still useful in other, more complex tasks?
    \item The adversarial training method is hard to get right and often touchy, in that getting the hyperparameters wrong results in quick divergence of the algorithm. What ways can be used to stabilize training and ensure convergence, and thus ensure that the theoretical guarantees presented here can work?
    \item There is a body of existing work for image recognition using adversarial networks.  Image recognition in general can sometimes be subject to various biases such as being more or less successful at recognizing the faces of people of different races.  Can multiple adversaries be combined to create high accuracy image recognition systems which do not exhibit such biases?
    \item In general, do more complex predictors require more complex adversaries? It would appear that in the case of $y$ and $z$ discrete, a very simple adversary suffices no matter how complex the predictor. Does this also apply to continuous cases, or would a simple adversary be too easy to deceive for a complex predictor?
\end{enumerate}

\bibliographystyle{aaai}
%\bibliography{main-bibliography.bib}

\begin{thebibliography}{}

\bibitem[\protect\citeauthoryear{Asuncion and Newman}{2007}]{asuncion2007uci}
Asuncion, A., and Newman, D.
\newblock 2007.
\newblock Uci machine learning repository.

\bibitem[\protect\citeauthoryear{Beutel \bgroup et al\mbox.\egroup
  }{2017}]{beutel2017data}
Beutel, A.; Chen, J.; Zhao, Z.; and Chi, E.~H.
\newblock 2017.
\newblock Data decisions and theoretical implications when adversarially
  learning fair representations.
\newblock {\em arXiv preprint arXiv:1707.00075}.

\bibitem[\protect\citeauthoryear{Bolukbasi \bgroup et al\mbox.\egroup
  }{2016}]{bolukbasi2016man}
Bolukbasi, T.; Chang, K.-W.; Zou, J.~Y.; Saligrama, V.; and Kalai, A.~T.
\newblock 2016.
\newblock Man is to computer programmer as woman is to homemaker? debiasing
  word embeddings.
\newblock In {\em Advances in Neural Information Processing Systems},
  4349--4357.

\bibitem[\protect\citeauthoryear{Goodfellow \bgroup et al\mbox.\egroup
  }{2014}]{goodfellow2014generative}
Goodfellow, I.; Pouget-Abadie, J.; Mirza, M.; Xu, B.; Warde-Farley, D.; Ozair,
  S.; Courville, A.; and Bengio, Y.
\newblock 2014.
\newblock Generative adversarial nets.
\newblock In {\em Advances in neural information processing systems},
  2672--2680.

\bibitem[\protect\citeauthoryear{Hardt \bgroup et al\mbox.\egroup
  }{2016}]{hardt2016equality}
Hardt, M.; Price, E.; Srebro, N.; et~al.
\newblock 2016.
\newblock Equality of opportunity in supervised learning.
\newblock In {\em Advances in Neural Information Processing Systems},
  3315--3323.

\bibitem[\protect\citeauthoryear{Kingma and Ba}{2014}]{kingma2014adam}
Kingma, D., and Ba, J.
\newblock 2014.
\newblock Adam: A method for stochastic optimization.
\newblock {\em arXiv preprint arXiv:1412.6980}.

\bibitem[\protect\citeauthoryear{Kleinberg, Mullainathan, and
  Raghavan}{2016}]{kleinberg2016inherent}
Kleinberg, J.; Mullainathan, S.; and Raghavan, M.
\newblock 2016.
\newblock Inherent trade-offs in the fair determination of risk scores.
\newblock {\em arXiv preprint arXiv:1609.05807}.

\bibitem[\protect\citeauthoryear{Lum and Johndrow}{2016}]{lum2016statistical}
Lum, K., and Johndrow, J.
\newblock 2016.
\newblock A statistical framework for fair predictive algorithms.
\newblock {\em arXiv preprint arXiv:1610.08077}.

\bibitem[\protect\citeauthoryear{Mikolov \bgroup et al\mbox.\egroup
  }{2013}]{mikolov2013distributed}
Mikolov, T.; Sutskever, I.; Chen, K.; Corrado, G.~S.; and Dean, J.
\newblock 2013.
\newblock Distributed representations of words and phrases and their
  compositionality.
\newblock In {\em Advances in neural information processing systems},
  3111--3119.

\end{thebibliography}

\end{document}